\newcommand{\X}{\mathbf{X}}
\newcommand{\Y}{\mathbf{Y}}
\newcommand{\x}{\mathbf{x}}
\newcommand{\y}{\mathbf{y}}
\newcommand{\bP}{\boldsymbol{\Pi}}
\newcommand{\bBeta}{\boldsymbol{\beta}}
\newcommand{\R}{\mathbb{R}}
\newcommand{\E}{\text{E}}
\newcommand{\Var}{\text{Var}}
\newtheorem{proposition}{Proposition}
\newtheorem{assumption}{Assumption}
\begin{document}

%

%

\twocolumn[

\aistatstitle{Robust approximate linear regression without correspondence }

\aistatsauthor{
   Amin Nejatbakhsh \\
  Department of Neurobiology and Behavior\\
  Columbia University\\
  \And
  Erdem Varol \\
  Department of Statistics\\
  Columbia University
  \vspace{2ex}
}

]

\begin{abstract}
We propose methods for estimating correspondence between two point sets under the presence of outliers in both the source and target sets. The proposed algorithms expand upon the theory of the regression without correspondence problem to estimate transformation coefficients using unordered multisets of covariates and responses. Previous theoretical analysis of the problem has been done in a setting where the responses are a complete permutation of the regressed covariates. This paper expands the problem setting by analyzing the cases where only a subset of the responses is a permutation of the regressed covariates in addition to some covariates being outliers. We term this problem \textit{robust regression without correspondence} and provide several algorithms based on random sample consensus for exact and approximate recovery in a noiseless and noisy one-dimensional setting as well as an approximation algorithm for multiple dimensions. The theoretical guarantees of the algorithms are verified in simulated data. We demonstrate an important computational neuroscience application of the proposed framework by demonstrating its effectiveness in a \textit{Caenorhabditis elegans} neuron matching problem where the presence of outliers in both the source and target nematodes is a natural tendency.
\end{abstract}

\section{Introduction}

Point set registration is one of the central problems in computer vision that involves the optimization of a transformation that aligns two sets of point clouds~\cite{van2011survey,tam2013registration}. Point set registration have been applied in numerous fields including but not limited to robotics~\cite{zhang2015visual}, medical imaging~\cite{audette2000algorithmic}, object recognition~\cite{drost2010model}, panorama stitching~\cite{bazin2014globally} and computational neuroscience~\cite{bubnis2019probabilistic}.
 The types of allowable transformations and energy functions utilized in the cost function have differentiated varying methods \cite{besl1992method,myronenko2010point,zhou2016fast,hast2013optimal,irani1999combinatorial,aiger20084,mount1999efficient,tam2013registration,indyk1999geometric,pokrass2013sparse}. In general, point set registration methods employ an iterative strategy of solving the transformation and updating the matching which works well in practice but there are no guarantees for reaching the global optima ~\cite{chetverikov2002trimmed}. Only a few methods have provided approximate globally optimal solutions \cite{yang2016go,zhou2016fast}. These methods rely on severe constraints of the transformation domains, such as the 3D rotation group SO(3), in order to employ branch and bound techniques on discretizations. 

Theoretical analysis of the recovery guarantees of point set registration has not been performed for a general number of dimensions until recently when it was termed as \textit{unlabelled sensing} by \cite{unnikrishnan2015unlabeled} as a problem with duality connections with the well-known problem of compressed sensing~\cite{donoho2006compressed}. In this problem, similar to linear regression, the response signal is modeled as a linear combination of a set of covariates. However, the correspondence of the responses to the covariates is modeled as having been shuffled by an unknown permutation matrix. For this reason, the problem has also been termed as \textit{linear regression with shuffled labels} \cite{abid2017linear}, \textit{linear regression with an unknown permutation} \cite{pananjady2016linear}, \textit{homomorphic sensing} \cite{tsakiris2019homomorphic} or \textit{linear regression without correspondence} (RWOC) \cite{hsu2017linear}, the latter of which will be used to refer to the problem herein.  Although RWOC is, in general, an NP-hard problem \cite{pananjady2016linear}, there have been several advances in recent years to propose signal to noise ratio (SNR) bounds for recovery of the permutation matrix and the regression coefficients \cite{pananjady2016linear,unnikrishnan2018unlabeled}. Conversely, the same works have also analyzed the SNR and sampling regime by which no recovery is possible.

Nevertheless, the computer vision community has attempted to solve the point set registration problem through consideration of outliers and missing correspondences, which are typically encountered in real-world applications. A common technique used in point set registration to robustify the optimization against outliers is to employ random sampling consensus (RANSAC) subroutines  \cite{fischler1981random,torr2000mlesac,yang2019polynomial}. The main advantages of RANSAC are that the randomization procedure employed can severely reduce the computational cost of an otherwise combinatorial search.

Motivated by applications in computational neuroscience such as matching the neuronal populations of \textit{Caenorhabditis elegans} (\textit{C. elegans}) across different nematodes, we aim to unify the ideas presented in RWOC literature and robust point set registration methods to provide provably approximate solutions to the RWOC problem in the presence of outliers and missing measurements commonly encountered in fluorescence microscopy data. Robustly and automatically matching and identifying neurons in \textit{C. elegans} could expedite the post-experimental data analysis and hypothesis testing cycle \cite{bubnis2019probabilistic,kainmueller2014active,nguyen2017automatically,yemini2019neuropal}.

\subsection{Main contributions}
The main contributions presented in this paper are the introduction of randomized algorithms for the recovery of the regression coefficients in the RWOC problem that takes into account noise, missing data, and outliers. Hsu et al.~\cite{hsu2017linear} provide algorithms for the noisy case without generative assumptions; their algorithm takes into account square permutation matrices, which assumes that the entire signal is captured in the responses and does not take into account any missing correspondences or outliers. Unnikrishnan et al.~\cite{unnikrishnan2015unlabeled,unnikrishnan2018unlabeled} provide combinatorial existence arguments. Tsakiris et al.~\cite{tsakiris2019homomorphic} provide an algorithm that takes into account missing correspondences or outliers but not both. Our method is designed for the practical purpose of matching point clouds that may have noisy measurements, missing correspondences, and outliers. This is undoubtedly the case in the application domain of neuron tracking and matching in biological applications. Specifically, we demonstrate the efficacy of the proposed method in the identification and tracking of in-vivo (\textit{C. elegans}) neurons. 
In summary, our contributions are four-fold:
\begin{enumerate}
    \item We introduce the notion of \textit{"robust" regression without correspondence} (rRWOC) that models missing correspondences between responses and covariates as well as completely missed associations in the form of outliers and missing data.
    \item We introduce a polynomial-time algorithm to find the exact solution for the one-dimensional noiseless rRWOC and the approximate solution in the noisy regime.
    \item We introduce a randomized approximately correct algorithm that is more efficient than pure-brute force approaches in multiple dimensional rRWOC.
    \item We demonstrate the computational neuroscience application of our approach to point-set registration problems in the context of automatically matching and identification of the cellular layout of the nervous system of the nematode \textit{C. elegans}.
\end{enumerate}
\begin{figure*}[!ht]
    \centering   \includegraphics[height=4cm]{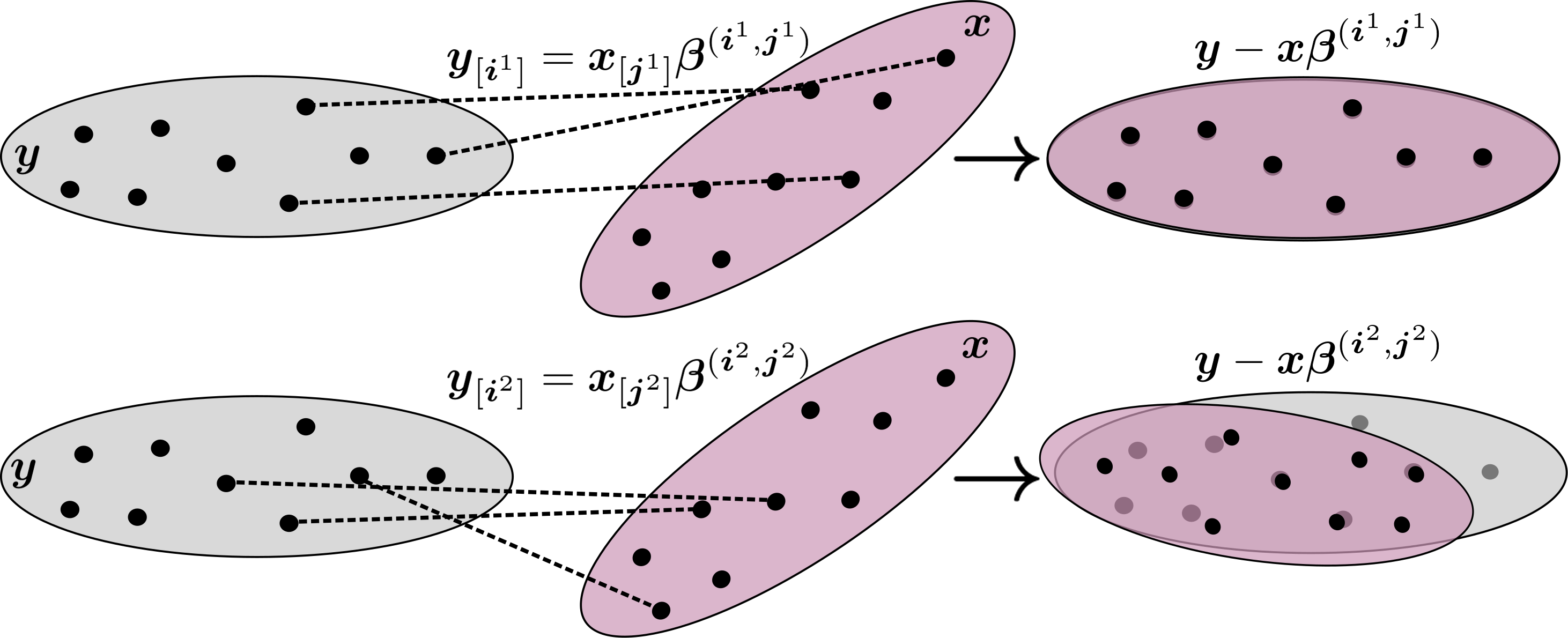}
    \caption{Geometric intuition of the proposed algorithms. \textbf{Top:} Solving the linear regression problem on the correct d-tuple correspondence yields regression coefficients that result in a transformed set with maximal alignment. \textbf{Bottom:} Incorrect correspondences of the sampled $d$-tuples results in sub-maximal alignment.}
    \label{schematic}
\end{figure*}

\subsection{Paper organization}
In section~\ref{model}, we introduce our statistical regression model (rRWOC) that accounts for permuted correspondences, outliers, and noise. We then demonstrate the added computational complexity of recovery of rRWOC in contrast with simple linear regression and RWOC in a one-dimensional case in section~\ref{one_d}. In section~\ref{multi_d}, we provide a randomized algorithm for the rRWOC problem in multiple dimensions with convergence bounds. Lastly, in section~\ref{sim_results}, we verify the theoretical recovery guarantees in simulated experiments and in section~\ref{c_elegans} show the neuroscience application of the proposed algorithms in the \textit{C. elegans} neuron matching problem.
\section{Regression model}\label{model}
First, we introduce notation. Let $\X = [\x_1|\x_2|\ldots|\x_m]^T\in\R^{m\times d}$ and $\Y = [\y_1|\y_2|\ldots|\y_n]^T\in\R^{n\times d}$ denote two d-dimensional point sets consisting of $m$ and $n$ points, respectively. Let us call $\X$ the reference or source set. Let $\Y$ denote the target set which may contain outliers and missing correspondences. Note that the points in $\X$ that are missing correspondences in $\Y$ can be seen as outliers in the source set, hence justifying our claim that we model outliers in both the source and target sets. 

Let the set of indices $\mathcal{I} = \lbrace i_1,\ldots,i_{|\mathcal{I}|}\rbrace \subseteq [n]$ denote the indices of $\y_j$ which are inliers. Conversely, let $\mathcal{O} = \lbrace o_1,\ldots,o_{|\mathcal{O}|}\rbrace \subseteq [n]$ denote set of indices of $\y_j$ which are outliers. By construction, these sets are a disjoint partition of the entire index set of target points: $\mathcal{I} \bigcup \mathcal{O} = [n]$ and $\mathcal{I} \bigcap \mathcal{O} = \emptyset$. Let $\bP \in \mathcal{P}^{n\times m}$ denote a possibly unbalanced permutation matrix where there are at most $\min\lbrace n,m\rbrace$ ones placed such that no row or column has more than a single one. All other entries are zeroes. Let $\pi(i)$ denote the location of the one in the $i$th row of the permutation matrix $\bP$. Next, let $\bBeta \in \mathbb{R}^{d\times d}$ denote the regression coefficients and $\epsilon \sim \mathcal{N}(0,\nu 
\boldsymbol{I})$ denote zero-mean Gaussian noise. Lastly, let $\text{U}[\mathcal{C}]$ denote the uniform distribution within some closed convex set $\mathcal{C}$. Given these definitions, we can define the \textbf{robust regression without correspondence} (rRWOC) model as
\begin{align}
\y_{i_j} &= \x_{\pi(i_j)}\bBeta + \epsilon~\quad~&\text{for}~i_j \in \mathcal{I} \nonumber \\
\y_{o_l} &\sim \text{U}[\mathcal{C}]~\quad~&\text{for}~o_l \in \mathcal{O}
\end{align}

Note that the bias terms in the regression can be modeled by padding $\x$ with a constant column of ones.

In contrast with linear regression, where the sole objective is to recover the coefficients $\bBeta$, the two-fold objective of RWOC is to recover the correct permutation matrix $\bP$, and the regression coefficients $\bBeta$. To add to the complexity of the problem, the three-fold objective of rRWOC is to recover the inlier set $\mathcal{I}$, the permutation $\bP$, and the coefficients $\bBeta$.

\section{Algorithms}
To aid in the recovery of the solution in rRWOC, we introduce the following assumption.

\begin{assumption}[Maximal inlier set]\label{assumption_1}
For point sets $\X$, $\Y$, there exists a triple $\lbrace \mathcal{I}^*,\bBeta^*,\bP^* \rbrace$ that is maximal in the sense that $n\geq |\mathcal{I}^*| \geq |\mathcal{I}'|$  such that any other triple $\lbrace \mathcal{I}',\bBeta',\bP' \rbrace$ is not considered to be the underlying regression model.
\end{assumption}

Assumption~\ref{assumption_1} allows the identifiability of whether a given hypothetical index set can be considered to be the true underlying inlier set or not. In practical terms, suppose we generate simulated data with $n$ points in $\Y$ of which $k > n/2$ are outliers generated uniformly and the remainder generated with respect to a coefficient $\bBeta^{\mathcal{I}}$ such that $\Y_{[\mathcal{I}]} = \X_{\pi(\mathcal{I})}\bBeta^{\mathcal{I}} + \epsilon^{\mathcal{I}}$. There may be cases such that uniformly generated "outliers", $\Y_{[\mathcal{O}]}$, are structured such that there exists a coefficient $\bBeta^{\mathcal{O}}$ and permutation $\bP^{\mathcal{O}}$ such that $ \Y_{[\mathcal{O}]} = \X_{\pi(\mathcal{O})}\bBeta^{\mathcal{O}} + \epsilon^{\mathcal{O}}$ where $\Var(\epsilon^{\mathcal{I}}) \geq \Var(\epsilon^{\mathcal{O}})$. In this case, $\bBeta^{\mathcal{O}}$ is identifiable but not verifiable as "correct."

Equipped with the rRWOC model and assumption~\ref{assumption_1}, we now demonstrate the progressive increase in the complexity of recovery of ordinary linear regression, RWOC, and rRWOC in one-dimension.

\subsection{Optimal regression in $d=1$}\label{one_d}
Linear regression in one-dimension with known correspondences, no offset term and no outliers can be obtained in $O(n)$ time using the univariate normal equation: $\beta_{OLS} = \frac{\sum_i^n y_i x_{\pi(i)}}{\sum_i^n x_{\pi{i}}^2}.$ On the other hand, RWOC in the one-dimensional case with \textit{no noise} can be solved in $O(n \log(n))$ steps via the method of moments and a simple sorting operation. Namely, first, the regressor $\beta_{RWOC}$ can be estimated using the ratio of the first moments of the covariates to the responses:
\begin{align}
    \beta_{RWOC} = \frac{\sum_{i=1}^n y_i}{ \sum_{i=1}^n x_i}
\end{align}
and then the permutation can be recovered using the re-arrangement inequality~\cite{beckenbach2012inequalities},
\begin{align}
&\min_{\bP} \sum_{i=1}^n (y_i - \hat{y}_{\pi(i)})^2 = \sum_{i=1}^n (y_{(i)} - \hat{y}_{(i)})^2 =\\
&\|\bP_y\y - \bP_{\hat{y}}\hat{\y}\|_2^2 \longrightarrow \bP_{RWOC} = \bP_y^T\bP_{\hat{y}}\nonumber
\end{align}
where $y_{(i)}$ denotes sorted $y_i$ and $\hat{y}_{(i)}$ denotes sorted $x_i\beta_{RWOC}$ and $\bP_y$ and $\bP_{\hat{y}}$ denote the permutation matrices that capture the sorting operations. 

In the case with outlier elements in $\y$, the problem is non-trivial, even in one dimension, since sorting does not allow the identification of outliers. To solve the one dimensional rRWOC, we introduce algorithm~\ref{alg:1d_exhaustive} which recovers the triplet $\lbrace \mathcal{I}^*,\bBeta^*,\bP^* \rbrace$ in an exhaustive fashion.

\begin{algorithm}[!htb]
\caption{One dimensional robust regression without correspondence - Exhaustive approach}\label{alg:1d_exhaustive}
\begin{algorithmic}
\State{\textbf{Input: }} Reference set: $\lbrace x_1,\ldots,x_m \rbrace$, target set: $\lbrace y_1,\ldots,y_n \rbrace$, outlier margin: $\nu$
\State{\textbf{Require: }} $k < \frac{n}{2}$ (number of outliers)
\For{$i=1,\ldots,n$}
\For{$j=1,\ldots,m$}
\State Compute $\beta^{i,j} = y_i/x_j$
\State Compute linear assignment~\cite{kuhn1955hungarian}:
\State $\bP^{i,j}\leftarrow\underset{\bP \in \mathcal{P}^{n \times m}}{\arg\min} \| \x\beta^{i,j} - \bP^T\y \|_2^2$
\State Compute hypothetical inliers:
\State $\mathcal{I}^{i,j} = \lbrace l : |x_{\pi^{i,j}(l)}\beta^{i,j} - y_l| \leq \nu \rbrace$
\EndFor
\EndFor\\
\Return $(i^*,j^*) = \underset{(i,j)}{\arg\max}|\mathcal{I}^{i,j}|$
, $\mathcal{I}^* = \mathcal{I}^{i^*,j^*}$,
\indent \indent $\bP^* = \bP^{i^*,j^*}$,~$\beta^* \leftarrow \frac{\sum_{l \in \mathcal{I}^*} y_{l}x_{\pi^*(l)}}{\sum_{l \in \mathcal{I}^*} x_{\pi^*(l)^2}}$ 
\end{algorithmic}
\end{algorithm}

\begin{proposition}[Correctness of Algorithm~\ref{alg:1d_exhaustive}] Suppose there exist $n-k$ inliers in $\y$ and that $k<n/2$. Then algorithm~\ref{alg:1d_exhaustive} yields the correct regression coefficient $\beta^* = \beta$ with probability 1 for noiseless data and with high probability for noisy data with an appropriately selected margin parameter $\nu$.
\end{proposition}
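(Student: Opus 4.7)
The plan splits into a noiseless and a noisy case, both exploiting the fact that algorithm~\ref{alg:1d_exhaustive} enumerates all $nm$ candidate slopes $\beta^{i,j}=y_i/x_j$. Among these candidates is at least one ``correct'' pair: taking any $i^*\in\mathcal{I}$ with $j^*=\pi(i^*)$ gives $\beta^{i^*,j^*}=\beta$ exactly in the noiseless case, and $\beta^{i^*,j^*}=\beta+\epsilon_{i^*}/x_{j^*}$ in the noisy case. The strategy is then to compare the inlier count attained by this correct candidate against those produced by all wrong candidates, and finally to observe that the recomputed OLS step is consistent on the recovered inlier set.

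\textbf{Noiseless case ($\nu=0$).} I would first certify $|\mathcal{I}^{i^*,j^*}|\geq n-k$ by noting that every inlier $l\in\mathcal{I}$ satisfies $y_l=x_{\pi(l)}\beta$, so the linear-assignment step in algorithm~\ref{alg:1d_exhaustive} attains zero cost on those $n-k$ matches by using $\pi$ restricted to $\mathcal{I}$. I would then argue that any wrong candidate $\beta'\neq\beta$ yields $|\mathcal{I}^{i,j}|<n-k$ almost surely: since the outliers are drawn from the continuous distribution $\text{U}[\mathcal{C}]$, with probability $1$ no outlier $y_{o_l}$ coincides with $x_j\beta'$ for any $j$; and for an inlier $l$ to contribute, one would need $x_{\pi'(l)}/x_{\pi(l)}=\beta/\beta'$ under some alternate $\pi'$, a measure-zero alignment under mild genericity of $\X$. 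Combined with the hypothesis $k<n/2$ (so $n-k>k$), this forces the argmax to occur at a pair with $\beta^{i,j}=\beta$, and the OLS recomputation on $\mathcal{I}^*=\mathcal{I}$ returns $\beta^*=\beta$ exactly.

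\textbf{Noisy case.} The correct candidate gives $\beta^{i^*,j^*}=\beta+\epsilon_{i^*}/x_{j^*}$. Choosing $i^*$ to minimize $|\epsilon_{i^*}|$, a Gaussian tail bound taken over the $n-k$ inliers shows $\min_{i\in\mathcal{I}}|\epsilon_i|$ is small with high probability, so the candidate is close to $\beta$. Picking $\nu$ a few standard deviations above the noise scale, a concentration argument then yields $|\mathcal{I}^{i^*,j^*}|\geq n-k-o(n-k)$. For any wrong candidate, the probability that a uniformly distributed outlier falls within $\nu$ of any of the $m$ predictions $x_j\beta'$ is $O(m\nu/\text{diam}(\mathcal{C}))$, so by Hoeffding and a union bound over the $nm$ hypotheses the number of spurious inliers stays well below $n-k$ provided $\nu$ is chosen small enough relative to $\text{diam}(\mathcal{C})$ and $k$. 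Once $\mathcal{I}^*$ closely approximates the true $\mathcal{I}$, a standard perturbation analysis for ordinary least squares gives $\beta^*-\beta$ of order $1/\sqrt{n-k}$ in the noise standard deviation.

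\textbf{Main obstacle.} The delicate part is the noisy case: the margin $\nu$ must be simultaneously large enough to retain nearly all true inliers despite Gaussian perturbations, yet small enough that no wrong candidate spuriously accumulates $\geq n-k$ ``inliers'' through the combined contribution of outliers and borderline inliers, even after a union bound over all $nm$ candidates. Balancing these two requirements, and carefully tracking the deviation probabilities, will yield an explicit constraint relating $\nu$, the noise variance, $k$, $n$, and $\text{diam}(\mathcal{C})$; this is exactly what the proposition encodes by ``an appropriately selected margin parameter $\nu$.''
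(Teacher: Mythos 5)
Your proposal is correct in outline and shares the paper's skeleton: exhaustive enumeration over all $nm$ pairs guarantees hitting a correct correspondence, the correct candidate certifies at least $n-k$ hypothetical inliers, and the argument reduces to showing that no wrong candidate can match that count. Where you diverge is in how the separation between correct and wrong candidates is established. The paper's noiseless argument is deterministic: it exhibits an explicit gap $\epsilon = \min_{i,l,k,\,i\neq k}\, x_l\beta^*\left(\frac{x_k}{x_i}-1\right)$ below which a wrong candidate's residuals cannot fall, and then invokes Assumption~1 (the maximal-inlier-set assumption) to rule out a competing coefficient explaining $\geq n-k$ points; in the noisy case it fixes $\nu = \frac{1}{2}\min_{l\neq k}|(x_l-x_k)\beta^*|$ and compares the means and variances of the residuals under correct and incorrect matchings. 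You instead argue probabilistically: almost-sure non-coincidence of the continuously distributed outliers, genericity of $\X$ to exclude accidental alignments $x_{\pi'(l)}/x_{\pi(l)}=\beta/\beta'$, and Hoeffding plus a union bound over the $nm$ hypotheses to control spurious inliers. Your route buys explicit quantitative control of the outlier contribution (which the paper largely absorbs into Assumption~1 and the phrase ``with high probability''), and you also analyze the final OLS refit on the recovered inlier set, a step the paper's noisy-case proof omits (it stops at recovering $\beta^*+\epsilon/x_i$). The cost is that you implicitly add a genericity/continuity assumption on $\X$ that the proposition does not state, whereas the paper packages exactly this identifiability concern into Assumption~1; this is a different placement of the same hypothesis rather than a gap.
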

\begin{proof}(The full proof is included in supplementary material) The overview of the proof is as follows. In the noiseless case, if $j=\pi(i)$ then $\beta^{i,j} = \frac{y_i}{x_j} = \beta^*$. The projection $\x\beta^{i,j}$ maps all reference points to their exact corresponding reference points. Thus the Hungarian algorithm will yield these as the assignments since they incur minimal cost. Therefore, we will have $|\mathcal{I}^{i,j}| \geq n-k$. The cardinality of inliers is lower bounded and not equal to $n-k$ since outlier points may by chance be transformed to points in $\y$ as well. Contrarily, suppose the transformation $\beta^{i,l}$ for $l \neq \pi(i)$ yields a larger hypothesized inlier set $\mathcal{I}^{i,l}$, such that  $|\mathcal{I}^{i,l}| > |\mathcal{I}^{i,j}|$ then this means that there are more points in  $\x\beta^{i,l}$ that are closer to $\y$ than $\x\beta^{i,j}$, contradicting the assumption that $n-k$ is the maximal inlier set.
\end{proof}
The time complexity of algorithm~\ref{alg:1d_exhaustive} can be analyzed as follows. The main computational cost is due to linear assignment which incurs a cost of $O(\max\lbrace m,n \rbrace^3)$ if \cite{jonker1986improving} variant is used. Linear assignment is repeated $mn$ times. If $m$ and $n$ are of the same order, then algorithm~\ref{alg:1d_exhaustive} has complexity $O(n^5)$.

However, if the ratio of inliers to outliers is relatively high, then it is possible to use randomization procedures like RANSAC~\cite{fischler1981random,torr2000mlesac} to speed up the algorithm to yield the correct regression coefficient with high probability. This is demonstrated in algorithm~\ref{alg:1d_randomized}.

\begin{proposition}[Correctness of Algorithm~\ref{alg:1d_randomized}] Suppose there are $n-k$ inliers in $\x$ and that $k<n/2$. In $q \geq \frac{\log(1-\delta)}{\log(1-\frac{n-k}{mn})}$ iterations, algorithm~\ref{alg:1d_randomized} yields the correct regression coefficient $\beta^* = \beta$ with probability $\delta\in(0,1)$ for an appropriately selected margin parameter $\nu$.
\end{proposition}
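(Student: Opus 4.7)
The plan is to reduce this to a standard RANSAC-style sampling bound, leveraging Proposition~1 (correctness of Algorithm~\ref{alg:1d_exhaustive}) as a black box. The randomized version (Algorithm~2, referenced but not shown in the excerpt) presumably replaces the double loop over all $(i,j) \in [n] \times [m]$ by drawing $q$ pairs $(i,j)$ uniformly at random and then performing the same linear assignment / inlier identification step. So the only new content to prove is a probabilistic lower bound on $q$ ensuring that at least one ``good'' pair is sampled with probability at least $\delta$.

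First I would define a sampled pair $(i,j)$ to be \emph{good} if $i \in \mathcal{I}^*$ and $j = \pi^*(i)$. By the argument already given in the proof of Proposition~1, any good pair yields $\beta^{i,j} = \beta^*$ (exactly, in the noiseless case; within the margin~$\nu$, in the noisy case), and the subsequent Hungarian matching plus thresholding step then recovers $\mathcal{I}^*$ and $\bP^*$ correctly, so that the returned maximum-cardinality triple is the correct one. Counting good pairs: each of the $|\mathcal{I}^*| = n-k$ inlier indices $i$ contributes exactly one good $j = \pi^*(i)$, so there are $n-k$ good pairs out of $mn$ total pairs drawn uniformly.

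Thus the probability that a single random draw is \emph{not} good is $1 - \frac{n-k}{mn}$, and by independence of the $q$ draws,
\begin{equation}
\Prob(\text{no good pair in } q \text{ draws}) = \left(1 - \frac{n-k}{mn}\right)^q.
\end{equation}
Requiring this to be at most $1-\delta$ and taking logarithms (noting that $\log(1-\tfrac{n-k}{mn}) < 0$, which flips the inequality when dividing) yields exactly
\begin{equation}
q \geq \frac{\log(1-\delta)}{\log\!\left(1 - \frac{n-k}{mn}\right)},
\end{equation}
which is the stated bound.

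I do not anticipate a hard step here: the combinatorics are elementary and the reduction to Proposition~1 is immediate. The only subtle point worth stating explicitly is the noisy case, where ``correct regression coefficient'' must be interpreted up to the noise tolerance $\nu$; as in Proposition~1, one needs $\nu$ chosen large enough that inlier residuals concentrate below $\nu$ with high probability (e.g.\ a Gaussian tail bound on $|\epsilon|$), and small enough that uniformly distributed outliers almost surely exceed it. Provided such a $\nu$ is fixed in advance, the randomization and the noise event are independent, so the probability that the algorithm succeeds within $q$ iterations is at least $\delta$ times the margin-admissibility probability, and one simply absorbs the latter into the ``high probability'' qualifier as in Proposition~1.
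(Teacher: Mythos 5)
Your proposal is correct and follows essentially the same route as the paper's proof: count the $n-k$ ``good'' pairs among the $mn$ possible draws, bound the probability that $q$ independent uniform samples all miss by $\left(1-\frac{n-k}{mn}\right)^q$, set this to $1-\delta$, and take logarithms. Your version is slightly more careful than the paper's (explicitly invoking Proposition~1 for what happens once a good pair is hit, noting the sign flip when dividing by the negative logarithm, and flagging the margin-admissibility caveat in the noisy case), but these are refinements of the identical argument rather than a different approach.
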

\begin{proof}
The success of algorithm~\ref{alg:1d_exhaustive} relies on the fact that the exhaustive search eventually hits a tuple $(i,j)$ such that $j = \pi(i)$ which yields the correct regression coefficient. Therefore, when randomly sampling $(i,j) \sim [n]\times[m]$, the probability of choosing a corresponding pair is $\frac{n-k}{n} \frac{1}{m}$. The probability of iterating $q$ times such hat no correct correspondence is selected is $(1-(n-k)/(nm))^q = (1-\delta)$ where $\delta$ is the desired success rate. Taking logs yields, $q = \frac{\log(1-\delta)}{\log(1-(n-k)/(nm))}$
\end{proof}
\begin{algorithm}[!htb]
\caption{One dimensional robust regression without correspondence - Randomized approach}\label{alg:1d_randomized}
\begin{algorithmic}
\State{\textbf{Input: }} Reference set: $\lbrace x_1,\ldots,x_m \rbrace$, target set: $\lbrace y_1,\ldots,y_n \rbrace$, $\delta$ (probability of success), outlier margin: $\nu$
\State{\textbf{Require: }} $k < \frac{n}{2}$ (number of outliers)
\For{$t=1,\ldots,q$}
\State Sample $i \sim [n]$ and sample $j\sim[m]$
\State Compute $\beta^t = y_i/x_j$
\State Compute linear assignment~\cite{kuhn1955hungarian}: 
\State $\bP^t\leftarrow\underset{\bP \in \mathcal{P}^{n \times m}}{\arg\min} \| \x\beta^{t} - \bP^T\y \|_2^2$
\State Compute hypothetical inliers:
\State $\mathcal{I}^t = \lbrace l : |x_{\pi^t(l)}\beta^t - y_l| \leq \nu \rbrace$
\EndFor\\
\Return $t^* = \underset{t}{\arg\max}|\mathcal{I}^t|$
, $\mathcal{I}^* = \mathcal{I}^{t^*}$,\\
\indent \indent $\bP^* = \bP^{i^*,j^*}$,~$\beta^* \leftarrow \frac{\sum_{l \in \mathcal{I}^*} y_{l}x_{\pi^*(l)}}{\sum_{l \in \mathcal{I}^*} x_{\pi^*(l)^2}}$ 
\end{algorithmic}
\end{algorithm}

The time complexity of randomized algorithm~\ref{alg:1d_randomized} is $O\bigg(\frac{\log(1-\delta)}{\log(1-(n-k)/n^2)}n^3\bigg)$. 
\begin{figure*}[!tb]
    \centering \includegraphics[width=0.32\linewidth,trim=0 0.1cm 1.1cm 0,clip]{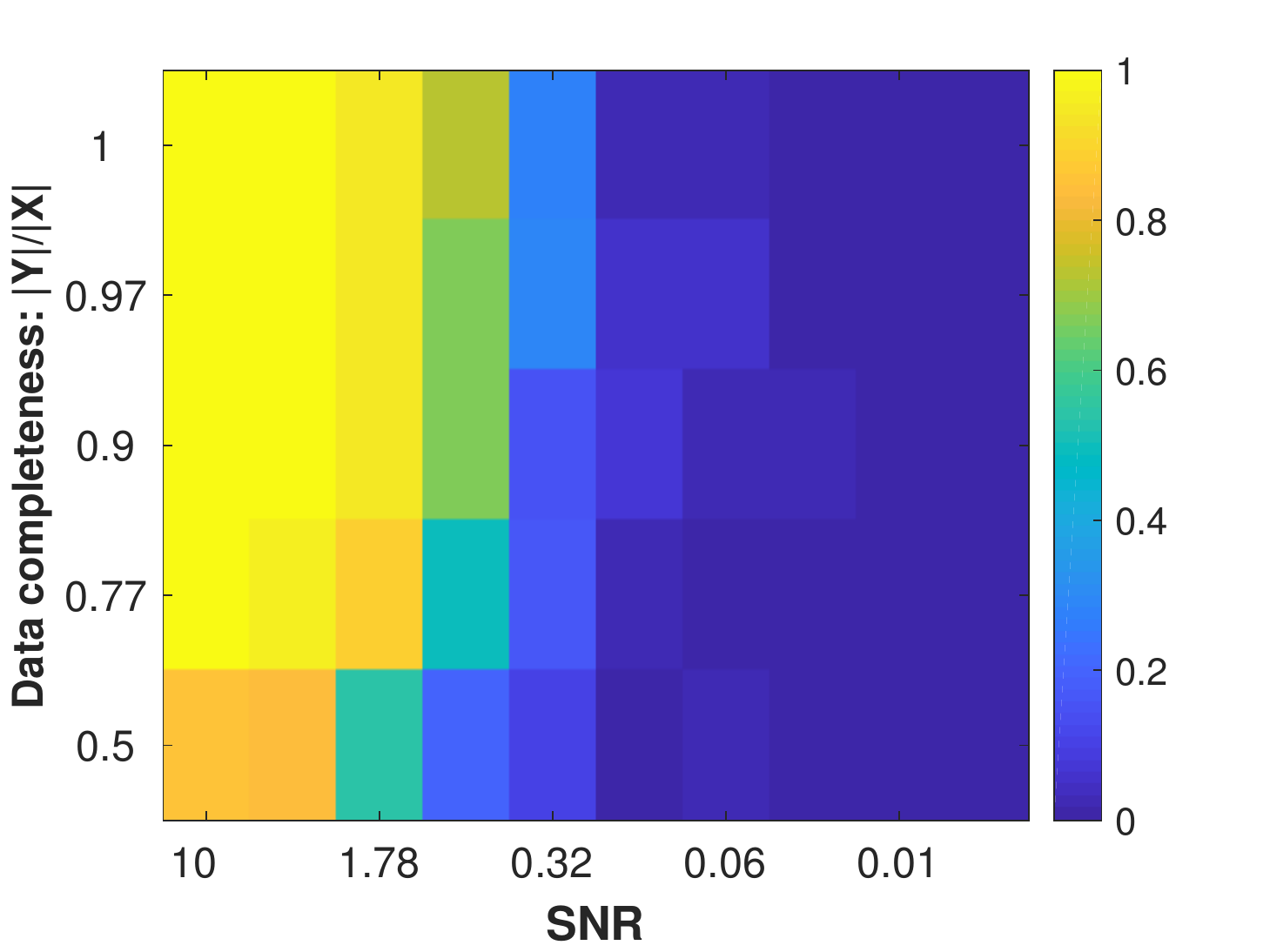}
        \includegraphics[width=0.32\linewidth,trim=0 0.1cm 1.1cm 0,clip]{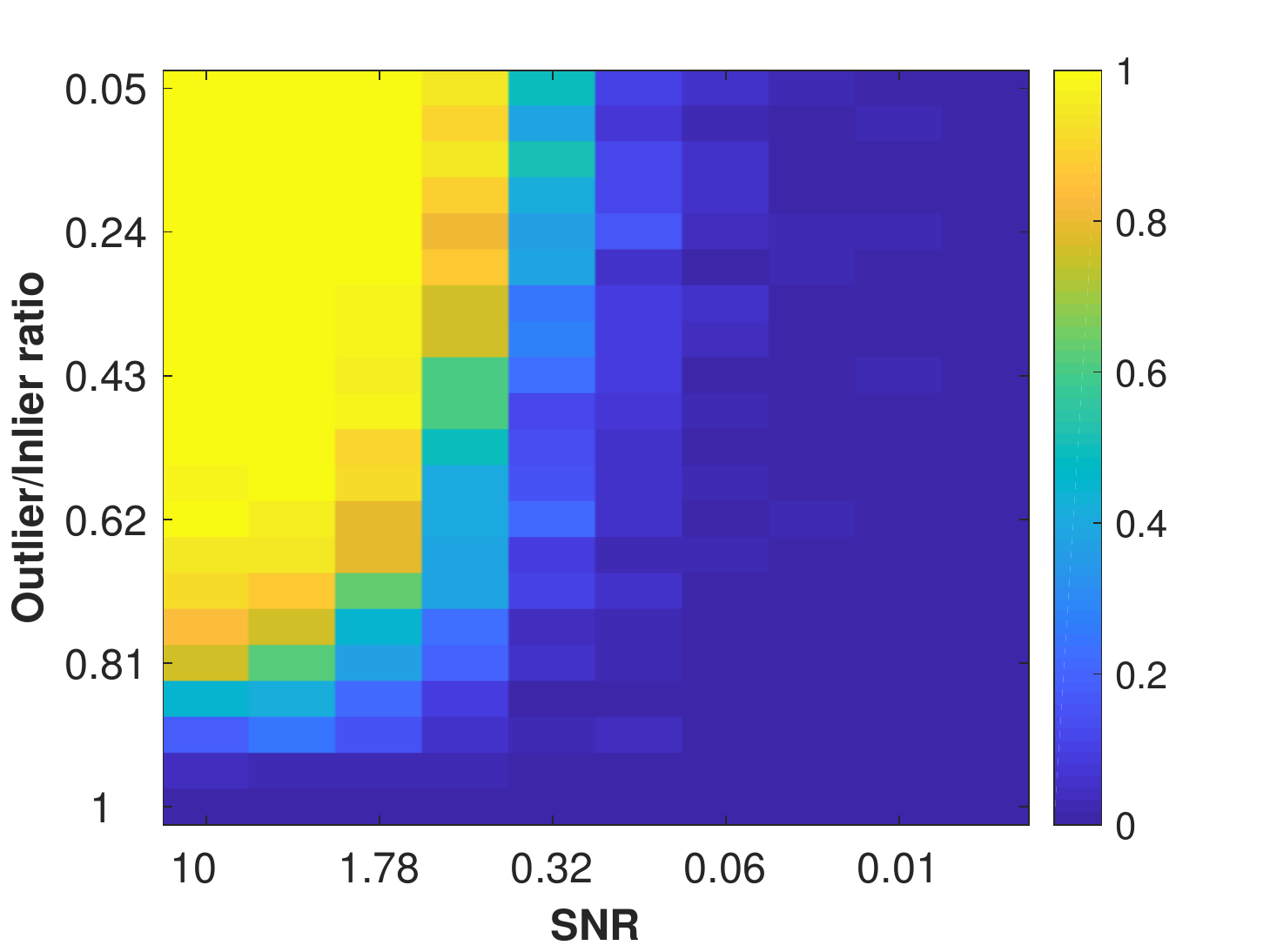}
  \includegraphics[width=0.32\linewidth,trim=0 0.1cm 1.1cm 0,clip]{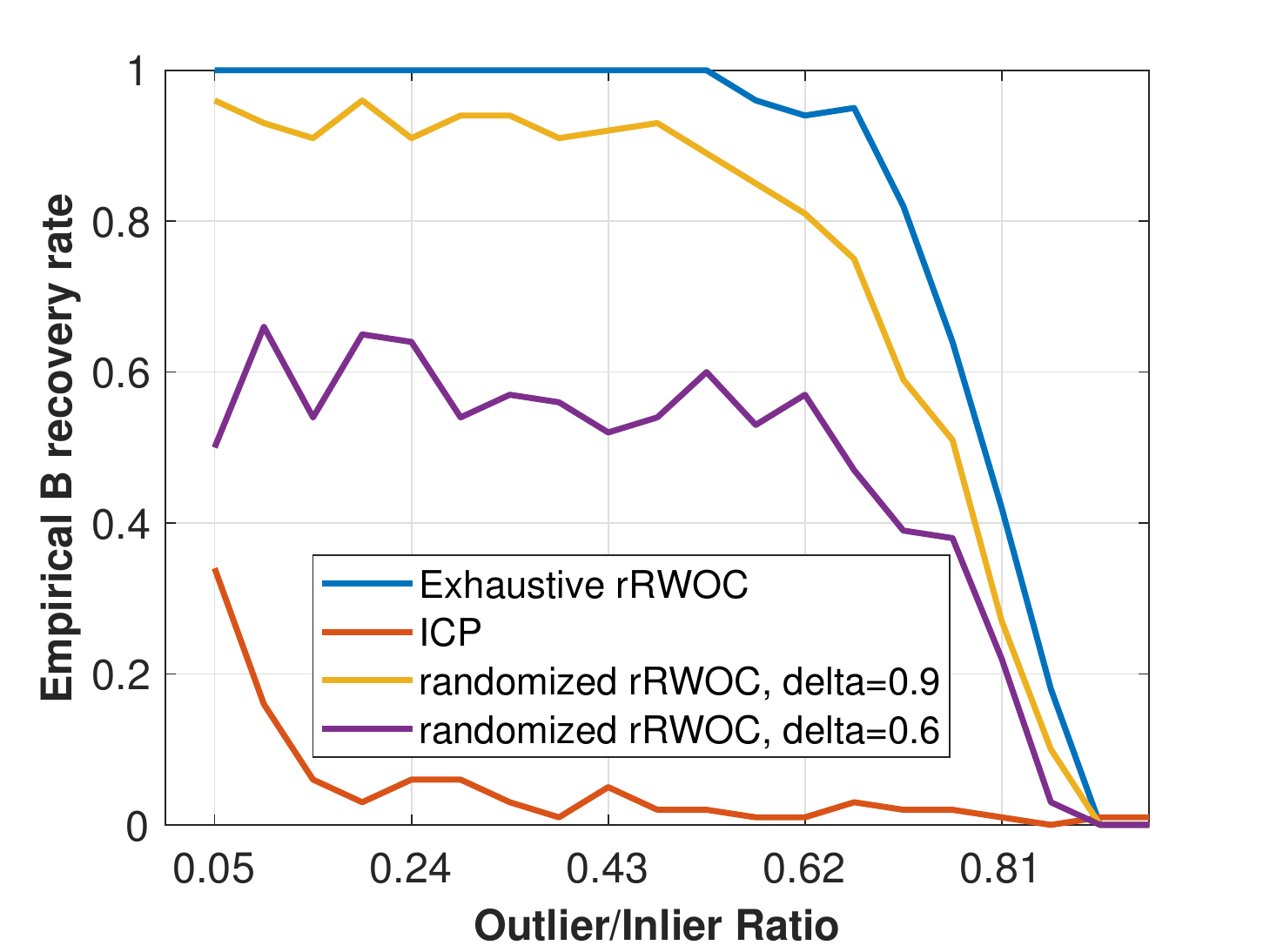}
    \caption{\textbf{Left:} Recovery rate (colorbar) vs. missing data ratio (y-axis) vs. SNR (x-axis), \textbf{Middle:} Recovery rate (colorbar) vs. outlier ratio (y-axis) vs. SNR (x-axis), \textbf{Right:} Recovery rate (y-axis) vs. outlier ratio (x-axis), blue: rRWOC, red: ICP, yellow: randomized rRWOC ($\delta=0.9)$), purple: randomized rRWOC ($\delta=0.6$)}
    \label{simulated_resultsl}
\end{figure*}

\subsection{Randomized approximation algorithm for $d\geq 2$}\label{multi_d}

The exhaustive approach for the $d\geq 2$ dimensional case requires $\binom{n}{d}\binom{m}{d}$ $d$-subset comparisons of $\X$,$\Y$ in order to guarantee hitting correct (in the noiseless case) or approximately correct (in the noisy case) regression coefficients, with complexity $O(m^d n^d)$. However, especially in higher dimensions, the randomized procedure enables a substantial reduction of iterations to yield a high probability correct triplet of inlier set, permutation, and regression coefficients. The randomized algorithm for rRWOC in $d\geq 2$ is demonstrated in algorithm~\ref{alg:ranpac}. Conceptually, the idea of the algorithm is illustrated in figure~\ref{schematic}. Random ordered $d$-tuples of reference and target point sets are sampled and are used to align the remainder of the point set. The number of hypothetical inliers for each hypothetical correspondence is assessed by checking whether the transformed reference points are arbitrarily close to a target point. With high probability, if correct a $d$-tuple correspondence is captured, the number of transformed reference points matching a target point will be high (Figure~\ref{schematic} top), otherwise it will result in a partial coverage (Figure~\ref{schematic} bottom).

\begin{algorithm}[!htb]
\caption{Robust regression without correspondence - Randomized approach}\label{alg:ranpac}
\begin{algorithmic}
\State \textbf{Input: } $\X = [ \x_1 | \ldots | \x_m ]^T \in\R^{m \times d}$ (reference points), $\Y=[ \y_1 | \ldots | \y_n ]^T\in\R^{n \times d}$ (target points), $\delta$ (probability of success), $\nu$ (outlier margin)
\State{\textbf{Require: }} $k < \frac{n}{2}$ (number of outliers)
  \State \textbf{Output: } $\mathcal{I} \subseteq [n]$ (index of inliers), $\hat{\bP} \in \mathcal{P}^{m \times |\mathcal{I}|}$ (permutation matrix), $\hat{\bBeta} \in \R^{d\times d}$ (regression coefficients)
  \medskip
\For{$t=1,\ldots,q$}
\State Sample $\boldsymbol{i} = (i_1,\ldots,i_d) \sim [n]^d$ w/o replacement
\State Sample $\boldsymbol{j} = (j_1,\ldots,j_d) \sim [m]^d$ w/o replacement
\State Compute $\bBeta^t = \underset{\bBeta}{\arg\min} \|\X_{[\boldsymbol{j}]}\bBeta - \Y_{[\boldsymbol{i}]} \|_F^2$
\State Compute linear assignment via~\cite{kuhn1955hungarian}:
\State $\bP^t\leftarrow\underset{\bP \in \mathcal{P}^{m\times n}}{\arg\min} \| \X\bBeta^t - \bP\Y \|_F^2$
\State Compute hypothetical inliers:
\State $\mathcal{I}^t = \lbrace l : \|\x_{\pi^t(l)}\bBeta^t - \y_l\|_2 \leq \nu \rbrace$
\EndFor\\
\Return $t^* = \arg\max_t  |\mathcal{I}^t|$, $\mathcal{I}^* = \mathcal{I}^{t^*}$,\\
\indent \indent$\bP^* = \bP^{t^*}_{\mathcal{I}^*}$,$\bBeta^* \leftarrow \arg\min_{\bBeta} \|\X_{\pi^*(\mathcal{I^*})} \bBeta - \Y_{\mathcal{I^*}}\|_F^2$
\end{algorithmic}
\end{algorithm}

\begin{proposition}
For $q\geq \frac{\log(1-\delta)}{\log\bigg(1-\frac{\binom{m-k}{d}}{\binom{m}{d}\binom{n}{d}}\bigg)}$, algorithm~\ref{alg:ranpac} recovers $\bBeta^*$ and $\bP^*$ and the set of inliers for the noiseless case with probability $(1-\delta)$ using arbibrarily small $\nu$. For sufficiently small noise variance and appropriately chosen $\nu$, algorithm~\ref{alg:ranpac} recovers approximate $\bBeta^*$ with high probability.
\end{proposition}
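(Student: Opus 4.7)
The plan is to mirror the proof structure of the one-dimensional randomized case (Proposition 2) and lift it to $d\geq 2$. First I would verify single-iteration correctness in the noiseless case: if the sampled index tuples satisfy $\pi(i_l) = j_l$ for every $l \in [d]$, then $\Y_{[\boldsymbol{i}]} = \X_{[\boldsymbol{j}]}\bBeta^*$ exactly, so the square least-squares problem $\min_{\bBeta} \|\X_{[\boldsymbol{j}]}\bBeta - \Y_{[\boldsymbol{i}]}\|_F^2$ has a unique solution $\bBeta^t = \bBeta^*$, provided $\X_{[\boldsymbol{j}]}$ is invertible, which holds generically for points in general position. Under this $\bBeta^*$, each of the $n-k$ inliers maps exactly to its target, so the Hungarian step identifies the true assignment and $|\mathcal{I}^t| = n-k$ for any $\nu > 0$. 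By Assumption~\ref{assumption_1}, no alternative triple can exceed this inlier count, so this iteration is selected as $t^*$ and the final refinement step recovers $\bBeta^*$.

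Next I would compute the per-iteration hit probability. Up to re-ordering (which does not affect the regression solution at a correct tuple), sampling without replacement is equivalent to sampling unordered $d$-subsets uniformly from $\binom{[n]}{d} \times \binom{[m]}{d}$. A sampled pair is ``correct'' if the subset from $\Y$ lies inside the inlier set and the subset from $\X$ is its image under $\pi$; the number of such pairs equals the number of inlier $d$-subsets in the source (the $\binom{m-k}{d}$ factor in the stated bound). Hence the success probability per round is $p = \binom{m-k}{d}/(\binom{n}{d}\binom{m}{d})$, and by independence across $q$ rounds, the probability of missing every correct tuple is $(1-p)^q$; equating to $1-\delta$ and taking logs yields the announced iteration count.

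For the noisy case, I would argue that when a correct tuple is hit, the perturbed solution satisfies $\bBeta^t = \bBeta^* + \X_{[\boldsymbol{j}]}^{-1}\,\epsilon_{[\boldsymbol{i}]}$, whose Frobenius norm is controlled by $\sqrt{\nu}/\sigma_{\min}(\X_{[\boldsymbol{j}]})$. Provided $\nu$ is chosen larger than the typical inlier residual $\|\x_{\pi(l)}\bBeta^t - \y_l\|_2$ (a standard Gaussian concentration bound) but smaller than the minimum separation of outlier projections from the target set, the true inliers are again identified, Assumption~\ref{assumption_1} still selects a correct-hit iteration as $t^*$, and the final OLS refinement on $\mathcal{I}^*$ inherits classical concentration guarantees, yielding an approximate $\bBeta^*$ with high probability.

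The main obstacle is the noisy regime, and specifically the uniform control of $\sigma_{\min}(\X_{[\boldsymbol{j}]})^{-1}$ over random $d$-subsets combined with an anticoncentration-style statement excluding ``phantom'' large inlier sets produced by unlucky outlier placements or by spurious $\bBeta^t$ values that happen to lie within noise tolerance of many target points. This is precisely where the qualitative phrases ``sufficiently small noise variance'' and ``appropriately chosen $\nu$'' absorb the quantitative content; making them precise would require conditioning on high-probability events for the smallest singular value of a random submatrix of $\X$ and for the maxima of a finite collection of Gaussian residuals, which is the bulk of the technical work in a fully rigorous version.
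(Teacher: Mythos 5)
Your proposal follows essentially the same route as the paper: a per-round counting argument for the probability of sampling a correctly corresponding pair of $d$-tuples, followed by the geometric $(1-p)^q = 1-\delta$ calculation and the log-ratio for $q$; you are in fact more explicit than the paper about single-iteration correctness (invertibility of $\X_{[\boldsymbol{j}]}$, the role of Assumption~\ref{assumption_1}) and about the noisy perturbation of $\bBeta^t$. One shared caveat worth noting: your parenthetical claim that re-ordering does not affect the regression solution holds only for a simultaneous re-ordering of $\boldsymbol{i}$ and $\boldsymbol{j}$, so independently sampled ordered tuples whose underlying subsets correspond still need their orderings to align (a $1/d!$ factor missing from $p$), an imprecision the paper's own proof shares, along with its unresolved $\binom{m-k}{d}$ versus $\binom{n-k}{d}$ ambiguity that your "inlier subsets in the source" reading papers over rather than resolves.
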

\begin{proof}
Analogous to the analysis of algorithm~\ref{alg:1d_randomized}, the probability of drawing $d$ inliers out of $n$ points with k outliers in $\Y$ is $\frac{\binom{n-k}{d}}{\binom{n}{d}}$. The probability of matching the drawn inliers with the $d$ corresponding sampled reference points in $\X$ is $\frac{1}{\binom{m}{d}}$. Probability that any draw is not going to match is $1-\frac{\binom{n-k}{d}}{\binom{m}{d}\binom{n}{d}}$. The probability that $q$ draws will be incorrect is $\bigg(1-\frac{\binom{m-k}{d}}{\binom{m}{d}\binom{n}{d}}\bigg)^q$. If we set this to be the probability of failure $(1-\delta)$, we then have the estimate for the number of draws we need to make as $q(\delta,n,m,k)\geq \log(1-\delta)/\log\bigg(1-\frac{\binom{m-k}{d}}{\binom{m}{d}\binom{n}{d}}\bigg)$
\end{proof}

The complexity of algorithm~\ref{alg:ranpac} can be analyzed as follows. In each inner loop, the regression coefficient solution requires $O(d^3)$ time, the Hungarian algorithm requires $O(nmd)$ to compute the input distance matrix and then $O(\max\lbrace n,m \rbrace^3)$ to optimize the permutation matrix. The rest of the operations are $O(d)$. Therefore, the overall time complexity is
\small
\begin{align}
    O\bigg(\frac{\log(1-\delta)}{\log\bigg(1-\frac{\binom{m-k}{d}}{\binom{m}{d}\binom{n}{d}}\bigg)}(d^3 + nmd + \max\lbrace n,m \rbrace^3)\bigg).
\end{align}
\normalsize

\begin{figure*}[!htb]
    \centering
    \includegraphics[trim=0cm 0 0 0,clip,width=1\linewidth]{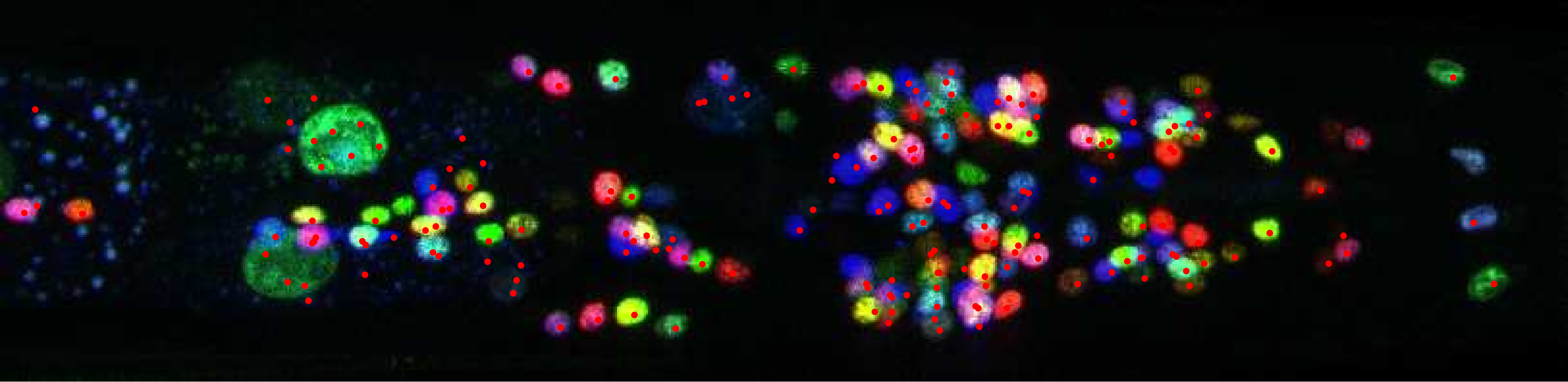}
    \caption{2D projection of 3D fluorescence microscopy image of \textit{C.elegans} head in Yemini et al. dataset. Superimposed annotation points denote neuron locations. Outliers are detections that do not correspond to neurons and missing data are undetected neurons.}
    \label{worm_annotations}
\end{figure*}

\subsubsection{Margin parameter ($\nu$) selection}
Both of the proofs of the noiseless and the noisy cases of proposition 1 rely on knowledge of the true regression coefficient and the noise variance in order to estimate the margin coefficient $\nu$ and output the optimal regression coefficient with high probability. However, in practice, as in many RANSAC-like robust regression settings, these parameters cannot be known apriori, and $\nu$ is typically determined via empirical heuristics and or cross-validation \cite{fischler1981random}.

In the noiseless case, an appropriate heuristic is choosing $\nu$ arbitrarily small since the correct regression should yield zero residual. However, for the noisy case, if available, supervised data should be used with known correspondences to estimate the actual dispersion of point correspondences.
\section{Numerical results}
To verify the theoretical guarantees of the proposed algorithms, simulated data in 3 dimensions was generated in both noisy and noiseless regimes. Furthermore, iterative solutions of $\bBeta$ and $\bP$ were obtained to demonstrate the suboptimality of local minima found using block coordinate descent for this non-convex problem.

The neuroscience application of rRWOC was demonstrated in the context of point set matching of neurons of \textit{C. elegans} worms recorded using fluorescence microscopy imaging. The matching accuracy with respect to ground truth was assessed for rRWOC as well as a robust variant of the iterative closest point (ICP) algorithm~\cite{besl1992method} known as trimmed ICP~\cite{chetverikov2002trimmed}.

\textbf{Computational setup and code:} 
All experiments were performed on an Intel i5-7500 CPU at 3.40GHz with 32GB RAM. MATLAB code for 3D versions of algorithm~\ref{alg:ranpac} are included in supplementary material along with sample \textit{C. elegans} neuron point clouds.
\begin{figure*}[!htb]
    \centering
    \includegraphics[width=1\linewidth]{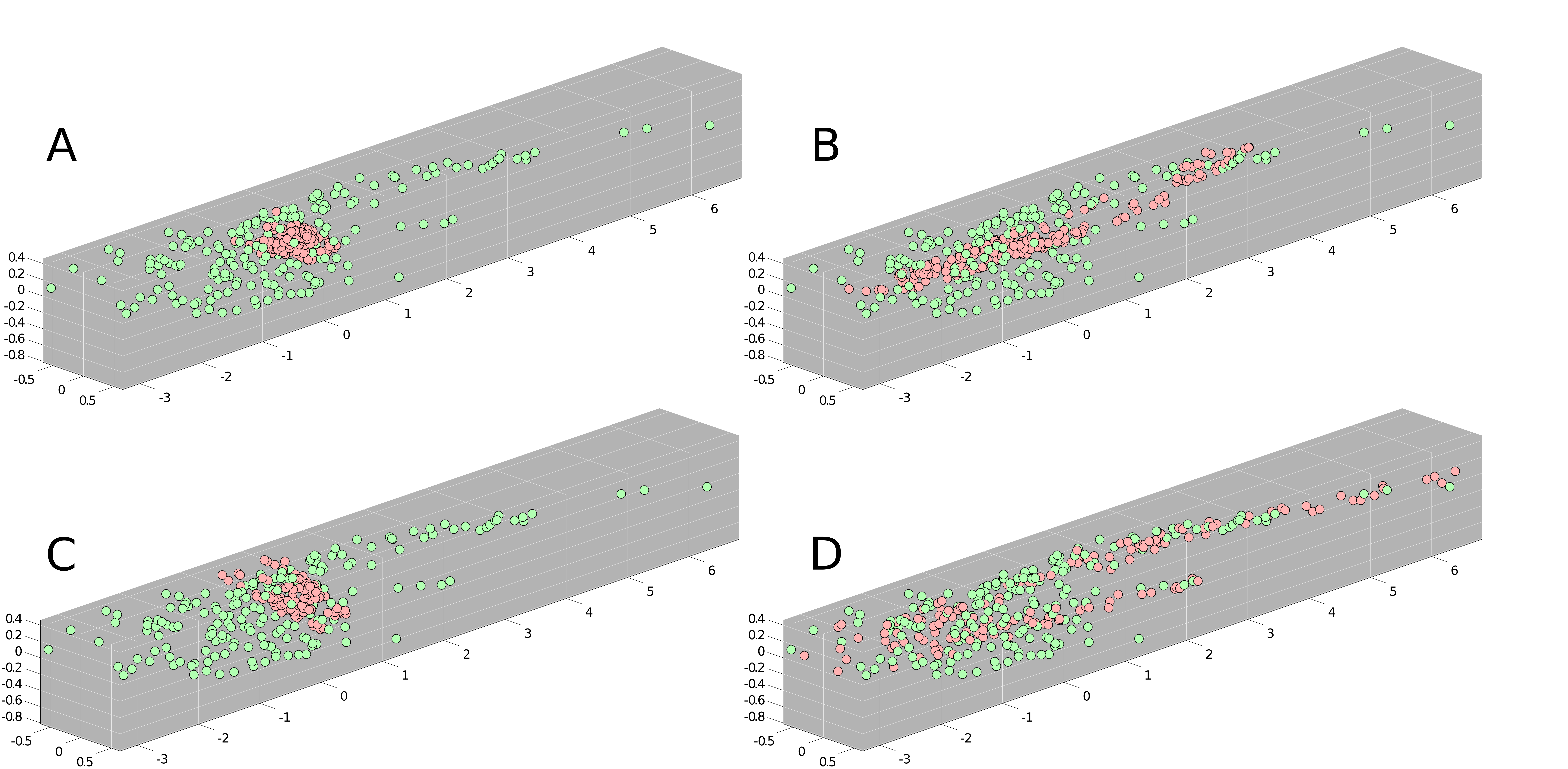}
    \caption{\textbf{A:} Unaligned point sets of reference \textit{C. elegans} neuron positions (red) and target neuron positions (green) \textbf{B:} Alignment with coherent point drift algorithm~\cite{myronenko2010point}, \textbf{C:} Alignment with iterative closest point algorithm~\cite{chetverikov2002trimmed}, \textbf{D:} Alignment with proposed algorithm~\ref{alg:ranpac}.}
    \label{worm_matches}
\end{figure*}

\subsection{Simulated data}\label{sim_results}
Three dimensional source point set $\X$ was generated by sampling $\x_j \sim \mathcal{N}(\boldsymbol{0},\boldsymbol{I}_3)$ for $j=1,\ldots,J$ where $J\in[20,\ldots,40]$. A random transformation $\bBeta$ was obtained by computing the QR factorization of a $3\times 3$ random gaussian matrix $\mathbf{M}$ such that $\mathbf{Q}\mathbf{R} =\mathbf{M}$, taking the orthonormal rotation component $\mathbf{Q}$. This was randomly scaled by a factor between $s = [0.5,1.5]$ so that $\bBeta = s\mathbf{Q}$. For $k \in [1,\ldots,19]$,~~$20-k$ inlier target points were generated by transforming a random $20-k$ subset of $\X$ by $\bBeta$ and adding gaussian noise with varying $\sigma^2$: $\Y_{\mathcal{I}} = \X_{\pi(\mathcal{I})} \bBeta + \epsilon$. Furthermore, $k$ points in $\Y$ were randomly uniformly sampled from the convex hull of the $20-k$ inlier points: $\Y_{\mathcal{O}}\sim \text{U}[\mathcal{C}(\Y_{\mathcal{I}})]$.
This procedure yielded two unordered multisets, $\X\in\R^{J \times 3}$ and $\Y\in\R^{20 \times 3}$.
Using these unordered multisets as input to rRWOC, the regression coefficients $\hat{\bBeta}$ were estimated. If $\|\hat{\bBeta} - \bBeta\|_F \leq 1e-3$, the event was considered a correct recovery, otherwise a failure. The margin parameter $\nu$ was set to be $\nu = \sigma$. Also, using the randomized algorithm~\ref{alg:ranpac}, the success probability parameter was set to $\delta = 0.9$.

This procedure was repeated 100 times for varying $k=1,\ldots,19$, varying $\sigma^2$ and varying $J=20,\dots,40$ to assess the empirical recovery rate as a function of outlier amount, SNR and missing correspondences in the target, respectively.
The recovery rates vs. outlier ratio, and SNR can be seen in figure~\ref{simulated_resultsl}-middle. The recovery rates vs. missing data ratio and SNR can be seen in figure~\ref{simulated_resultsl}-left. Lastly, the comparison of the recovery rate of exhaustive and randomized rRWOC versus iterative closest point can be seen in figure~\ref{simulated_resultsl}-right.

These empirical results demonstrate that for a sufficiently high SNR and outlier ratio of less than $50\%$, the proposed algorithm yields almost perfect recovery rates. Furthermore, the comparisons with iterative closest point algorithm (ICP) show that rRWOC is much more robust to outliers than ICP since the inclusion of any outliers results in failure of ICP to recover the true transformation.

\subsection{Neuron matching of \textit{C. elegans}}\label{c_elegans}

For this application, we have used the publicly available \textit{C. elegans} fluorescence imaging dataset of Nguyen et al. \cite{nguyen2017automatically} found at \url{http://dx.doi.org/10.21227/H2901H} as well as the neuronal position dataset provided in \cite{yemini2019neuropal}. The worm \textit{C. elegans} is a widely known model organism for studying the nervous system due to the known structural connectome of the 302 neurons it contains. The data provided 3D z-stack images of the head of 14 worms that each consists of approximately 185 to 200 neurons captured under confocal microscopy using florescent tagged protein GFP. In figure~\ref{worm_annotations}, the depth-colored 2D projection of an image frame can be seen superimposed with annotation points delineating the locations of neurons. Figure~\ref{worm_annotations} also highlights the need for a method of matching and aligning worm point clouds that is robust to outliers or missing associations. Here, we define outliers as points where there is no neuron present and define missing data as neurons with no detection present.

Of the 14 datasets of the head neurons of \textit{C.elegans} worms, random pairs were drawn to be the source and target point sets. From the remaining worms, the positional covariance of each neuron was estimated using the supervised alignment method of \cite{evangelidis2018joint}. Since the positional variance of each neuron was uniquely identified using training data, we used variable margin parameters for rRWOC such that $\nu_l = \underset{i=1,2,3}{\max} \lambda_i(\Sigma_l)$ where $\Sigma_l$ is the covariance matrix of the $l$th neuron and $\lambda_i(\cdot)$ denotes the $i$th eigenvalue. 
Randomized RWOC (algorithm~\ref{alg:ranpac}) was deployed with $\delta = 0.9$. The results were compared with iterative closest point(ICP)~\cite{besl1992method} as well as coherent point drift (CPD)~\cite{myronenko2010point} algorithms.

The recovery rates in terms of recovering the transformation $\bBeta^*$ as well as the permutation $\bP^*$, are summarized in table~\ref{table_1}. In general, rRWOC was able to recover both the transformation and permutation better than ICP and CPD, which tend to be initialization-dependent. In all of the experiments, ICP and CPD were initialized with random rotation. rRWOC is invariant to initialization since it is not a descent based method.

\begin{table*}[!tb]
    \centering
    \begin{tabular}{|c|c|c|c|c|c|c|c|c|c|}
    \hline
        \textbf{Method} & $TP$ & $FP$ & $TN$ & $FN$ & $ACC$ & $F1$ & $PREC$ & $REC$ & $MD$\\
        \hline
         \hline
         rRWOC   & 155 & 41 & 0 & 40 & 0.65 & 0.79 & 0.79& 0.79& 2.83 \\
         \hline
          ICP~\cite{chetverikov2002trimmed}& 10&186&0&185&0.02&0.055&0.05&0.05&3.4\\
         \hline 
          CPD~\cite{myronenko2010point}&20&176&0&175&0.05&0.10&0.10&0.10&10.89 \\
         \hline
    \end{tabular}
    \caption{Transformation recovery and permutation recovery by rRWOC, ICP, and CPD in the \textit{C.elegans} dataset. TP = true positive, FP = false positive, TN = true negative, FN = false negative, ACC = accuracy, F1 = F1 score , PREC = precision, REC = recall, MD = mean distance}
    \label{table_1}
\end{table*}
\subsection{Discussion}
In this paper, we expanded on the linear regression without correspondence model~\cite{unnikrishnan2018unlabeled,abid2017linear,hsu2017linear,pananjady2016linear} to account for missing data and outliers. Furthermore, we provided several exact and approximate algorithms for the recovery of regression coefficients under noiseless and noisy regimes. The proposed algorithms are combinatorial at worst with variable dimension. However, randomization procedures make the average case complexity in constant dimension tractable given enough tolerance for failure. We provided several theoretical guarantees for exact recovery and running time complexity. Furthermore, we empirically demonstrated the recovery rates of the proposed algorithms in simulated and biological data. A future algorithmic direction is to employ branch and bound techniques found in~\cite{tsakiris2019homomorphic} to reduce computational complexity of the brute force nature of the algorithms. 

The proposed methods can be thought of as a general framework for dissociating the outliers from a model-based data transformation process. The same principles can apply for the cases where either the generative noise is non-Gaussian, or some prior information exists about the structure of the outliers. Case-specific noise analysis is required for a particular model selection. Future work can focus on finding theoretical bounds on the robustness of the inlier recovery as a function of the number of outliers and the statistics of the generative and outlier distributions. 

\subsection*{Acknowledgements}
The authors would like to acknowledge the sources of funding: NSF NeuroNex Award DBI-1707398, and The Gatsby Charitable Foundation.

\appendix
\section{Proof of proposition 1}
\subsection{Noiseless case} Among the $mn$ hypothetical regression coefficients obtained through all possible pairs of $x_i$ and $y_j$, if a correct correspondence is encountered (i.e. $j=\pi^*(i))$, we have $y_{\pi(i)} = x_i \beta^*$ where $\beta^*$ is the true coefficient. Therefore if we let $\beta^i = \frac{y_{\pi(i)}}{x_i}$ then $\beta^i = \beta^*$. Using this estimate, the distances of the remaining covariates regressed to their corresponding responses is
\begin{align}
    x_l \beta^i - y_{\pi(l)}= x_l \beta^* - \y_{\pi(l)} = 0\nonumber
\end{align}
Therefore, when computing $\underset{\bP\in\mathcal{P}}\min\|\x\beta^i - \bP\y\|_2^2$ via the Hungarian algorithm~\cite{kuhn1955hungarian}, each column of the distance matrix $[D]_{p,q} = |x_p\beta^{i} - y_q|$ corresponding to inlier points in $\y$ (i.e. $q \in \mathcal{I}^*$) will have at least one zero element. Thus, the optimal assignment $\bP^i$ will include all of the permutations $\pi^i(l)=l$ since they incur zero cost. Since there are $m-k$ of them by assumption 1, then $\sum_l \boldsymbol{1}(|x_l - y_{\pi^i(l)}|\leq \epsilon/2)) \geq m-k$. This is inequality because there might be additional outlier points that are by chance close to the regressed points.

Conversely, for a pair $(x_i,y_{\pi(k)})$ where $k\neq i$, we have the estimated coefficient $\beta^{i,k} = \frac{y_{\pi(k)}}{x_i} = \frac{x_k \beta^*}{x_i}$. The distances of the remaining covariates regressed with this estimate to their corresponding responses are
\begin{align}
    x_l \beta^{i,k} - y_{\pi(l)}= \frac{x_l x_k \beta^*}{x_i} - y_{\pi(l)} = x_l\beta^*\bigg(\frac{x_k}{x_i} -1\bigg)\nonumber
\end{align}
Therefore, without loss of generality, assuming $x_l \neq 0$ (if $x_l=0$ the correspondence $(x_l,y_{\pi(l)})$ can be automatically inferred by choosing any $y_{\pi(l)}=0$. If there aren't any $y_j =0$, then this implies $x_l$ is a point without correspondence in $\y$), we have
\begin{align}
    |x_l \beta^{i,k} - y_{\pi(l)}| \geq \epsilon\nonumber
\end{align}
for some $\epsilon>0$.  $\epsilon$ can be explicitly stated as
\begin{align}
    \epsilon = \underset{i,l,k,~~ i\neq k}{\min} x_l\beta^*\bigg(\frac{x_k}{x_i} -1\bigg) \nonumber
    \end{align} 

On the other hand,
\begin{align}
    x_i \beta^{i,k} - y_{\pi(k)} = 0\nonumber
\end{align}
by construction.

Therefore, when computing $\underset{\bP\in\mathcal{P}}\min\|\x\beta^{i,k} - \bP\y\|_2^2$ via Hungarian algorithm, there will less than $m-k$ assignments in the optimal assignment $\bP^{i,k}$ such that
$|x_l - y_{\pi^{i,k}(l)}| \leq \epsilon/2$. Otherwise, this would imply the coefficient $\beta^{i,k}$ is a coefficient that explains the inliers, which by assumption 1 cannot be the case. Thus, $\sum_l \boldsymbol{1}(|x_l - y_{\pi^{i,k}(l)}|\leq \epsilon/2)) < m-k$.

This shows that the maximal cardinality of a hypothetical inlier set is at least $m-k$, and it is only achieved for a coefficient that is obtained by a correct correspondence pair. This is sufficient to show that algorithm 1 recovers the true coefficient $B^*$ under the noiseless regime.

\subsection{Noisy case}
Let the noise model of the inlier regression be $\epsilon \sim \mathcal{N}(0,\sigma^2)$. Therefore, if a correct correspondence is encountered, we have $y_{\pi(i)} = x_i \beta^* + \epsilon$ where $\beta^*$ is the true coefficient. The coefficient estimated from this pairing is $\beta^{i} = \frac{y_\pi(i)}{x_i} = \beta^* + \frac{\epsilon}{x_i}$. When this coefficient is applied to $\x$ we see that

\begin{align}
&\E(x_l \beta^i - y_{\pi(l)}) = 0,~\Var(x_l \beta^i - y_{\pi(l)}) =\bigg(\frac{x_l^2}{x_i^2}+1\bigg)\sigma^2 \nonumber \\
&\E(x_l \beta^i - y_{\pi(k)}) = (x_l-x_k)\beta^*\nonumber\\
&\Var(x_l \beta^i - y_{\pi(k)}) = \bigg(\frac{x_l^2}{x_i^2}+1\bigg)\sigma^2 \nonumber
\end{align}

Therefore, if $\sigma^2$ is small (i.e. in the SNR regime of \cite{pananjady2016linear}), we have $|x_l \beta^i - y_{\pi(l)}| < |x_l \beta^i - y_{\pi(k)}|$ for $l\neq k$ with high probability. Thus the row-wise minimal cost assignment in the Hungarian algorithm will be $\pi^i(l) = l$ with high probability. However, even if $\pi^i(l) \neq l$, if we set margin $\nu$ such that $\nu = \frac{1}{2}\underset{l,k~l\neq k}{\min} | (x_l - x_k)\beta^*|$, with high probability we will have that 
\begin{align}
   \sum_l \boldsymbol{1}(|x_l \beta^i - y_{\pi^i(l)}| \leq \nu ) \geq  \sum_l \boldsymbol{1}(|x_l \beta^{i,k} - y_{\pi^i(l)}| \leq \nu )
\end{align}
where $\beta^{i,k}$ denotes the regression coefficient obtained via incorrect correspondence $\beta^{i,k} = \frac{y_{pi(k)}}{x_i}$.
Therefore, if $\sigma^2$ is sufficiently small, with high probability, algorithm 1 recovers the coeffient $\beta^i = \beta^* + \frac{\epsilon}{x_i}$ for some $i \in \mathcal{I}$ where $\mathcal{I}$ denotes the set of inliers.

\bibliographystyle{unsrt}
\bibliography{refs}

\end{document}